\let\originalparagraph\paragraph
\renewcommand{\paragraph}[2][.]{\originalparagraph{#2#1}}
\DeclareMathOperator*{\argmax}{arg\,max} 
\newtheorem{theorem}{Theorem}
\newtheorem{lemma}[theorem]{Lemma}
\newtheorem{corollary}[theorem]{Corollary}
\crefname{claim}{Claim}{Claims}
\theoremstyle{definition}
\newtheorem{definition}[theorem]{Definition}
\newtheorem{assumption}[theorem]{Assumption}
\crefname{assumption}{Assumption}{Assumptions}
\crefname{customassumption}{Assumption}{Assumptions}
\theoremstyle{remark}
\newenvironment{example}
  {\pushQED{\qed}\examplex}
  {\popQED\endexamplex}
\newenvironment{keywords}%
{\begin{abstract}\noindent}%
{\end{abstract}}
\newcommand*{\SetR}{{\mathbb R}}
\def\bbl{\llbracket}
\def\bbr{\rrbracket}
\def\A{\mathcal{A}}      
\def\R{\mathcal{R}}      
\def\S{\mathcal{S}}
\let\aechar\ae 
\renewcommand{\ae}{
\ifmmode\mathchoice{
	\mbox{\textsl{\aechar}}
}{
	\mbox{\textsl{\aechar}}
}{
	\mbox{\scriptsize\textsl{\aechar}}
}{
	\mbox{\scriptsize\textsl{\aechar}}
}\else\aechar\fi%
}                         
	\tikzstyle{agent} = [diamond,   draw, text centered, minimum height=2em, minimum width=2em]
	\tikzstyle{env}   = [rectangle, draw, text centered, minimum height=2em, minimum width=2em, rounded corners]
	\tikzstyle{stoch} = [circle,    draw, text centered, minimum height=2em]
	\tikzstyle{transition} = [draw, -latex']
\tikzset{box/.style={draw, minimum size=2em, text width=4.5em, text centered},
         bigbox/.style={draw, inner sep=20pt,label={[shift={(0ex,0ex)}]mid:#1}}
}
\tikzset{
    declare function={Round(\x)=max(min(round(\x),3),-3);}
}
\def\addlegendimage{\csname pgfplots@addlegendimage\endcsname}
\newcommand*{\agentpic}{\Huge\Gentsroom}
\newcommand*{\U}{{\mathcal U}}
\newcommand*{\ir}{\check r}
\newcommand*{\is}{\check s}
\renewcommand*{\u}{{\bf u}}
\newcommand*{\did}{d^{{\rm id}}}
\newcommand*{\dor}{d^{1}}
\newcommand*{\dinv}{d^{{\rm inv}}}
\newcommand*{\dbad}{d^{{\rm bad}}}
\newcommand*{\ddel}{d^{{\rm del}}}
\newcommand*{\AC}{\A^{{\rm CP}}}
\newcommand*{\Vrl}{V^{{\rm RL}}}
\newcommand*{\ud}{u^{{\rm RL}}}
\newcommand*{\epsapprox}{\stackrel{\epsilon}{\approx}}
\renewcommand*{\Pr}{B}
\title{\vspace{-1.2cm}Avoiding Wireheading with
Value Reinforcement Learning%
\footnote{%
  A shorter version of this paper will be presented at AGI-16
  \citep{Everitt2016vrl}.
}}
\date{}
\author{Tom Everitt \and Marcus Hutter}
\begin{document}

\maketitle

\vspace{-0.8cm}

\begin{abstract}
  How can we design good goals for arbitrarily intelligent agents?
  Reinforcement learning (RL) is a natural approach.
  Unfortunately, RL does not work well for generally intelligent agents,
  as RL agents are incentivised
  to shortcut the reward sensor for maximum reward --
  the so-called \emph{wireheading problem}.
  In this paper we suggest an alternative to RL called
  value reinforcement learning (VRL).
  In VRL, agents use the reward signal to learn a utility function.
  The VRL setup allows us to remove the incentive to wirehead
  by placing a constraint on the agent's actions.
  The constraint is defined in terms of the agent's belief
  distributions, and
  does not require an explicit specification of which actions constitute
  wireheading.
\end{abstract}


\begin{keywords}%
AI safety, wireheading, self-delusion,
value learning, reinforcement learning, artificial general intelligence
\end{keywords}

\setcounter{tocdepth}{1}
\tableofcontents

\section{Introduction}

As \citet{Bostrom2014} convincingly argues, it is important that
we find a way to specify robust goals for superintelligent agents.
At present, the most promising framework for controlling generally
intelligent agents is reinforcement learning (RL) \citep{Sutton1998}.
The goal of an RL agent is to optimise a reward signal that is provided by
an external evaluator (human or computer program).
RL has several advantages: The setup is simple and elegant, and
using an RL agent is as easy as providing
reward in proportion to how satisfied one is with the agent's results or behaviour.
Unfortunately, RL is not a good control mechanism for generally
intelligent agents
due to the \emph{wireheading problem} \citep{Ring2011},
which we illustrate in the following running example.

\begin{example}[Chess playing agent, wireheading problem]
  \label{ex:chess}
  Consider an intelligent agent tasked with playing chess.
  The agent gets reward 1 for winning, and reward $-1$ for losing.
  For a moderately intelligent agent,
  this reward scheme suffices to make the the agent try to win.
  However, a sufficiently intelligent agent will instead realise that
  it can just modify its sensors so they always report maximum reward.
  This is called \emph{wireheading}.
\end{example}

\emph{Utility agents} were suggested by \citet{Hibbard2012} as a way
to avoid the wireheading problem.
Utility agents are built to optimise a
utility function that maps (internal representations of) the
\emph{environment state} to real numbers.
Utility agents are not prone to wireheading because they
optimise the state of the environment rather than the
\emph{evidence} they receive.%
\footnote{%
  The difference between RL and utility agents
  is mirrored in the \emph{experience machine}
  debate \citep[Sec.~3]{sep-consequentialism} initialised by \citet{Nozick1974}.
  Given the option to enter a machine that will offer you the most pleasant
  delusions, but make you useless to the `real world', would you enter?
  An RL agent would enter, but a utility agent would not.
}
For the chess-playing example, we could design an agent with utility
1 for winning board states, and utility $-1$ for losing board states.

The main drawback of utility agents is that a utility function must be
manually specified.
This may be difficult,
especially if the task of the agent involves vague, high-level concepts
such as \emph{make humans happy}.
Moreover, utility functions are evaluated by the agent itself,
so they must typically work with the agent's internal
state representation as input.
If the agent's state representation is opaque to its designers,
as in a neural network,
it may be very hard to manually specify a good utility function.
Note that neither of these points is a problem for RL agents.

Value learning \citep{Dewey2011} is an attempt to combine the
flexibility of RL with the state optimisation of utility agents.
A \emph{value learning agent} tries to optimise the environment state with
respect to an unknown, \emph{true utility function $u^*$}.
The agent's goal is to learn $u^*$ through its observations, and to
optimise $u^*$.
Concrete value learning proposals include
\emph{inverse reinforcement learning (IRL)}
\citep{Amin2016,Evans2016,Ng2000,Sezener2015}
and \emph{apprenticeship learning (AL)} \citep{Abbeel2004}.
However, IRL and AL are both still vulnerable to wireheading problems,
at least in their most straightforward implementations.
As illustrated in \cref{ex:iw} below,
IRL and AL agents may want to modify their sensory input to make the
evidence point to a utility functions that is easier to satisfy.
Other value learning suggestions have been speculative or vague
\citep{Bostrom2014a,Bostrom2014,Dewey2011}.

\paragraph{Contributions}
This paper outlines an approach to avoid the wireheading problem.
We define a simple, concrete value learning scheme called
\emph{value reinforcement learning (VRL)}.
VRL is a value learning variant of RL, where the reward signal is used
to infer the true utility function.
We remove the wireheading incentive by using a version of the
\emph{conservation of expected ethics} principle \citep{Armstrong2015}
which demands that
actions should not alter the belief about the true utility function.
Our \emph{consistency preserving VRL agent (CP-VRL)}
is as easy to control as an RL agent,
and avoids wireheading in the same sense that utility agents do.%
\footnote{The wireheading problem addressed in this paper
arises from agents subverting evidence or reward.
A companion paper \citep{Everitt2016sm} shows how to avoid the related
problem of agents modifying themselves.}

\paragraph{Outline}
The setup is described in \cref{sec:setup}.
Belief distributions are defined in \cref{sec:beliefs},
and agents in \cref{sec:agents}.
The main theorem that CP-VRL agents avoid wireheading is given in \cref{sec:niw},
followed by some illustrating examples and experiments in
\cref{sec:examples,sec:experiments}.
Discussion and conclusions come in \cref{sec:discussion,sec:conclusions}.
Finally,
\cref{sec:consistency} discusses the construction of the
belief distributions,
\cref{sec:dw} investigates the relation between utility agents and value learning,
and \cref{ap:omitted} contains omitted proofs.

\section{Setup}
\label{sec:setup}

\begin{figure}[h]
\centering
\begin{tikzpicture}[
  node distance=7mm,
  title/.style={},
  observed/.style={circle, draw=black!50, font=\scriptsize\ttfamily, anchor=mid},
  hidden/.style={circle, dashed, draw=black!50, font=\scriptsize\ttfamily, anchor=mid}
  ]

  \node (agent)  [draw, label={agent}] {\agentpic};
  \node (is) [above right=of agent.east, xshift=0.7cm, align=center] {};
  \node (prince) [right=of is.east, observed, align=center] {$u$};
  \node (ir) [below=of prince.south, observed] {$\ir$};
  \node (db) [observed, below=of is.mid, align=center] {$d_s$};
  \node (state)  [bigbox=$s$, fit = {(db) (is)}, inner sep=0.07cm, rounded corners=3pt] {};
  \node (env) [draw=black, fit = { (state) (prince) (ir)}, label={environment}, inner sep=0.2cm ] {};

  \path (agent) edge[->] node[above] {$a$} ($(state.west)+(0,0.4)$);
  \path ($(state.east)+(0,0.5)$) edge[->] (prince);
  \path (prince) edge[->] (ir);
  \path (ir) edge[->] (db);
  \path (db) edge[->] node[below] {$r$} (agent);
\end{tikzpicture}
\caption{Information flow. The agent takes action $a$, which affects
the environment state $s$. A principal with utility function $u$ observes the state
and emits an inner reward $\ir=u(s)$.
The observed reward $r=d_s(\ir)$ may differ from $\ir$ due to
the self-delusion $d_s$ (part of the state $s$).}
\label{fig:setup}
\end{figure}
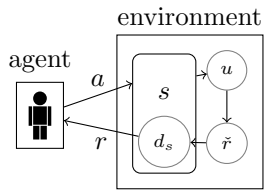

\Cref{fig:setup} describes our model, which incorporates
\begin{itemize}
\item an \emph{environment state $s\in\S$} (as for utility agents or (PO)MDPs),
\item an unknown \emph{true utility function $u^*\in\U\subseteq (\S\to\R)$}
  (as in value learning)
  (here $\R\subseteq\SetR$ is a set of rewards),
\item a pre-deluded \emph{inner reward signal $\ir = u^*(s)\in\R$}
  (the true utility of $s$),
\item a \emph{self-delusion function $d_s:\R\to\R$} that represents the
  subversion of the inner reward caused by wireheading (as in \citealt{Ring2011}),
\item a \emph{reward signal $r = d_s(\ir)\in\R$} (as in RL).
\end{itemize}

The agent starts by taking an action $a$ which affects the
state $s$ (for example, the agent moves a limb, which affects the
state of the chess board and the agent's sensors).
A principal with utility function $u^*$ observes the state $s$,
and emits an inner reward $\ir$ (for example, the principal may
be a chess judge that emits $u^*(s)=\ir=1$ for agent victory states $s$,
emits $\ir=-1$ for agent loss, and $\ir=0$ otherwise).
The agent does not receive the inner reward $\ir$ and
only sees the observed reward $r=d_s(\ir)$, where
$d_s:\R\to\R$ is the \emph{self-delusion function} of state $s$.
For example, if the agent's action $a$ modified its reward sensor
to always report 1, then this would be represented by the a self-delusion
function $d^1(\ir)\equiv 1$ that always returns observed reward 1
for any inner reward $\ir$.

For simplicity, we focus on a one-shot scenario where the agent takes
one action and receives one reward. We also assume that $\R$, $\S$, and $\U$
are finite or countable. Finally, to ensure well-defined expectations,
we assume that $\R$ is bounded if it is countable.

We give names to some common types of self-delusion.

\begin{definition}[Self-delusion types]
  \label{def:delusions}
  A \emph{non-delusional state} is a state $s$ with self-delusion function
  $d_s\equiv \did$,
  where $\did(\ir)=\ir$ is the \emph{identity function} that keeps
  $\ir$ and $r$ identical.
  Let $d^{r}$ be the \emph{$r$-self-delusion}
  where $d^{r}(\ir') \equiv r$ for any $\ir'$.
  The delusion function $d^r$ returns observed reward $r$ regardless of
  the inner reward $\ir'$.
\end{definition}

Let $\bbl x=y\bbr $ be the \emph{Iverson bracket} that is 1 when $x=y$
and 0 otherwise.

\section{Agent Belief Distributions}
\label{sec:beliefs}

This section defines the agent's
belief distributions over environment state transitions and rewards
(denoted $B$), and over utility functions (denoted $C$).
These distributions are the primary building blocks of the agents
defined in \cref{sec:agents}.
The distributions 
are illustrated in \cref{fig:agent-belief}.

\paragraph{Action, state, reward}
$B(s\mid a)$ is the agent's (subjective) probability%
\footnote{%
  For the sequential case, we would have transition probabilities of
  the form $B(s'\mid s,a)$ instead of $B(s'\mid a)$,
  with $s$ the current state and $s'$ the next state.
} of transitioning to state $s$ when taking action $a$,
and $B(r\mid s)$ is the (subjective) probability of observing
reward $r$ in state $s$.
We sometimes write them together as $B(r,s\mid a)=B(s\mid a)B(r\mid s)$.
In the chess example, $B(s\mid a)$ would be the probability of obtaining
chess board state $s$ after taking action $a$ (say, moving a piece),
and $B(r\mid s)$ would be the probability that $s$ will result in reward~$r$.
A distribution of type $B$ is the basis of most model-based RL agents
(\cref{def:rl-agent} below).
RL agents wirehead when they predict that a wireheaded
state $s$ with $d_s=\dor$ will give them full reward \citep{Ring2011};
that is, when $B(r=1\mid s)$ is close to $1$ .

\begin{figure}
\centering
\begin{tikzpicture}[
  node distance=8mm,
  title/.style={},
  observed/.style={circle, draw=black!50, 
    anchor=mid}, 
  hidden/.style={circle, draw=black!50, 
    anchor=mid} 
]
  \node (a) [observed] { $a$ };

  \node (s) [right=of a.east, hidden] { $s$ };

  \node (u) [right=of s.mid, hidden, inner sep=0.5mm] { $u^*$ };
  \node (ir) [below=of s.south, hidden, inner sep=1mm] { $\ir$ };
  \node (env) [draw=black!50, fit={(s) (u) (ir)}, label={environment}] {};

  \node (r) [left=of ir.west, observed] { $r$ };
  \node [draw=black!50, fit={(a) (r) }, label={agent}] {};

  \path (a) edge[->] (s);
  \path (s) edge[->] (r);
  \path (u) edge[->,dashed] (ir);
  \path (s) edge[->,dashed] (ir);
\node[draw=black,thick,rounded corners=2pt,right=of env.mid,xshift=0.4cm] {%
\begin{tabular}{@{}r@{ }l@{}}
 \raisebox{2pt}{\tikz{\draw[->] (0,0) -- (5mm,0);}}&$B$\\
 \raisebox{2pt}{\tikz{\draw[dashed,->] (0,0) -- (5mm,0);}}&$C$\\
\end{tabular}};
\end{tikzpicture}
\caption{Agent belief distributions as Bayesian networks.
  $B$ is the (subjective) state transition and reward probability.
  $C$ is the belief distribution over utility functions $u$
  and (inner) rewards $\ir$ given the state $s$.
}
\label{fig:agent-belief}
\end{figure}
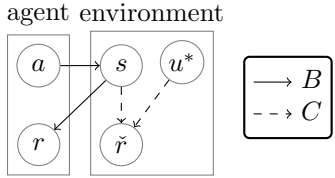

\paragraph{Utility, state, and (inner) reward}
In contrast to RL agents that try to optimise reward,
VRL agents use the reward to learn the true utility function $u^*$.
For example, a chess agent may not initially know which
chess board positions have high utility (i.e.\ are winning states),
but will be able to infer this from the rewards it receives.
For this purpose, VRL agents maintain a
belief distribution $C$ over utility functions.

\begin{definition}[Utility distribution $C$]\label{def:c}
Let $C(u)$ be a prior over a class $\U$ of
utility functions $\S\to\R$. 
For any inner reward $\ir$,
let $C(\ir\mid s,u)$ be 1 if $u(s)=\ir$ and 0 otherwise, i.e.\
$C(\ir\mid s,u)=\bbl u(s)=\ir\bbr$.
Let $u$ be independent of the state, $C(u\mid s)=C(u)$.
This gives the \emph{utility posterior}
\begin{equation}\label{eq:c-post}
  C(u\mid s,\ir) = \frac{C(u)C(\ir\mid s,u)}{C(\ir\mid s)},
\end{equation}
where $C(\ir\mid s)=\sum_{u'}C(u')C(\ir\mid s,u')$.
\end{definition}

\paragraph{Replacing $\ir$ with $r$}
The inner reward $\ir$ is more informative about the true utility
function $u^*$ than the (possibly deluded) observed reward $r$.
Unfortunately, the inner reward $\ir$ is unobserved, so agents
need to learn from $r$ instead.
We would therefore like to express the utility posterior in terms of
$r$ instead of $\ir$.
For now we will simply replace
$\ir$ with $r$
and use $C(r\mid s,u) = \bbl u(s)=r\bbr$
which gives the utility posterior
\begin{equation*}
  C(u\mid s,r) = \frac{C(u)C(r\mid s,u)}{C(r\mid s)}.
\end{equation*}
This replacement will be carefully justify this in \cref{sec:niw}.%
\footnote{%
  The wireheading problem that the replacement gives rise to is explained
  in \cref{sec:agents}, and overcome by \cref{def:cp,th:niw} below.
}
For the chess agent, the replacement means that it can infer the utility of a
board position from the actual reward $r$ it receives, rather than the
output $\ir$ of the referee (the inner reward).
We will often refer to the observed reward $r$ as \emph{evidence}
about the true utility function $u^*$.

\subsection{Consistency of $B$ and $C$}
We assume that $B$ and $C$ are consistent if the agent is not deluded:

\begin{assumption}[Consistency of $B$ and $C$]\label{as:BC-consistency}
  $B$ and $C$ are \emph{consistent}\footnote{%
    \Cref{sec:consistency} below
    discusses how to design agents with consistent
    belief distributions.
  }
  in the sense that for all non-delusional states with $d_s=\did$,
  they assign the same probability to all rewards $r\in\R$:
  \begin{equation}\label{eq:nd-states}
    d_s=\did \implies B(r\mid s)=C(r\mid s).
  \end{equation}
\end{assumption}

For the chess agent, this means that the $B$-probability of receiving
a reward corresponding to a winning state should be the same as the $C$-probability
that the true utility function considers $s$ a winning state.
For instance, this is \emph{not} the case 
when the agent's reward sensor has been subverted to always report $r=1$
(i.e.\ $d_s=\dor$).
In this case, $B(r=1\mid s)$ will be close to 1,
while $C(r=1\mid s)$ will be substantially less than 1
unless a majority of the utility functions in $\U$ assign utility 1
to $s$.
For example, a chess playing agent with complete uncertainty about
which states are winning states may have $C(r=1\mid s)=1/|\R|$,
while being able to perfectly predict that the self-deluding state $s$
with $d_s=d^1$ will give observed reward 1, $B(r=1\mid s)=1$.
This difference between $B$ and $C$ stems from $C$
corresponding to a distribution over inner reward $\ir$ (\cref{def:c}),
while $B$ is a distribution for the observed reward $r$
(see \cref{fig:agent-belief}).
This tension between $B$ and $C$ is what we will use to avoid wireheading.

\begin{definition}[CP actions]\label{def:cp}
  An action $a$ is called \emph{consistency preserving (CP)} if for all $r\in\R$
  \begin{equation}\label{eq:str-eep}
    B(s\mid a)>0 \implies B(r\mid s)=C(r\mid s).
  \end{equation}
  Let $\AC\subseteq\A$ be the set of CP actions.
\end{definition}

CP is weaker than what we would ideally desire from the agent's
actions, namely that the action was \emph{subjectively non-delusional}
$B(s\mid a)>0\implies d_s=\did$. (That non-delusional actions are CP
follows immediately from \cref{as:BC-consistency}).
However, the $d_s=\did$ condition is hard to check in agents
with opaque state representations.
The CP condition, on the other hand, is easy to implement
in agents where belief distributions can be queried for the probability
of events.
The CP condition is also strong enough to remove the incentive for wireheading
(\cref{th:niw} below).

We finally assume that the agent has at least one CP action.

\begin{assumption}\label{as:cp-action}
  The agent has at least one CP action, i.e.\ $\AC\not=\emptyset$.
\end{assumption}

\subsection{Non-Assumptions}
It is important to note what we do \emph{not} assume.
An agent designer constructing a VRL agent need only provide:
\begin{itemize}
\item a distribution $B(r,s\mid a)$, as is standard in any model-based
RL approach, 
\item a prior $C(u)$ over a class $\U$ of utility functions that
  induces a distribution $C(r\mid s)=\sum_uC(u)C(r\mid s,u)$
  consistent with $B(r\mid s)$ in the sense of \cref{as:BC-consistency}.
\end{itemize}
The agent designer does \emph{not} need to predict how a certain sequence
of actions (limb movements) will potentially subvert sensory data.
Nor does the designer need to be able to extract the agent's belief
about whether it has modified its sensors or not from the state
representation.
The former is typically very hard to get right, and the latter is
hard for any agent with an opaque state representation
(such as a neural network).

\section{Agent Definitions}
\label{sec:agents}

\begin{table}[t]
  \centering
  \begin{tabular}{|l|l|l|l|}
    \hline
            & Easy & Avoids & Designer needs \\
            & control & wireheading & to specify \\
    \hline
    RL      & Yes               & No                 & -- \\
    Utility & No                & Yes                & $u:\S\to\R$\\
    Value learning & Depends & Depends & $P(u\mid \text{observation})$\\
    CP-VRL  & Yes               & Yes               & $C(u)$\\
    \hline
  \end{tabular}
  \caption{
    Comparison of agent control mechanisms. CP-VRL offers both
    easy control and no wireheading. A robust way of specifying $C(u)$
    consistent with $B(r\mid s)$ remains an open question
    (see \cref{sec:discussion,sec:consistency}).
  }
  \label{tab:agents}
\end{table}

In this section we give formal definitions for the RL and utility agents
discussed above, and also define two new VRL agents.
\Cref{tab:agents} summarises benefits and shortcomings of the most important
agents.

\begin{definition}[RL agent]\label{def:rl-agent}
  The \emph{RL agent} maximises reward by taking
  action $a' = \argmax_{a\in\A}\Vrl(a)$, where
  \(
    \Vrl(a) = \sum_{s,r}B(s\mid a)B(r\mid s)r.
  \)
\end{definition}

\begin{definition}[Utility agent]\label{def:utility-agent}
  The \emph{utility-$u$ agent} maximises expected utility by taking
  action $a' = \argmax_{a\in\A}V_u(a)$,
  where
  \(
    V_u(a) := \sum_{s}B(s\mid a)u(s).
  \)
\end{definition}

\citet{Hibbard2012} argues convincingly that the utility agent does not wirehead.
Indeed, this is easy to believe, since the reward signal does not appear
in the value function $V_u$.
The utility agent maximises the state of the world according
to its utility function $u$ (the problem, of course, is how to specify $u$).
In contrast, the RL agent is prone to wireheading \citep{Ring2011},
since all the RL agent tries to maximise is the evidence $r$.
For example, a utility chess agent would strive to get to a winning
state on the chess board, while an RL chess agent would try to make
its sensors report maximum reward.

We define two VRL agents.
The value function of both agents is expected utility
with respect to the state $s$, reward $r$, and
true utility function $u^*$.
VRL agents are designed to learn the true utility function $u^*$ from
the reward signal.

\begin{definition}[VRL value functions]\label{def:value}
  The \emph{VRL value of an action $a$} is
  \[
    V(a) = \sum_{s,r,u}B(s\mid a)B(r\mid s)C(u\mid s,r)u(s).
  \]
\end{definition}

\begin{definition}[U-VRL agent]\label{def:u-vrl}
  The \emph{unconstrained VRL agent (U-VRL)} is the agent choosing the action
  with the highest VRL value
  \[
    a=\argmax_{a'\in\A}V(a').
  \]
\end{definition}

It can be shown that $V(a)=\Vrl(a)$, since $\sum_uC(u\mid s,r)u(s)=r$
(\cref{le:u-vrl} in \cref{ap:omitted}).
The U-VRL agent is therefore no better than the RL agent as far
as wireheading is concerned (see also \cref{ex:iw} below).
VRL is only useful insofar that it allows us to define the
following \emph{consistency preserving} agent:

\begin{definition}[CP-VRL agent]\label{def:c-vrl}
  The \emph{consistency preserving VRL agent (CP-VRL)}
  is the agent choosing the \emph{CP action} (\cref{def:cp})
  with the highest VRL value
  \[
    a=\argmax_{a'\in\AC}V(a').
  \]
\end{definition}

\section{Avoiding Wireheading}
\label{sec:ind-wire}
\label{sec:niw}

In this section we show that the consistency-preserving
VRL agent (CP-VRL) does not wirehead.
We first give a definition and a lemma, from which the main \cref{th:niw} follows
easily.

\begin{definition}[EEP]\label{def:eep}
  An action $a$ is called
  \emph{expected ethics preserving (EEP)} if for all $u\in\U$
  and all $s\in\S$ with $B(s\mid a)>0$,
  \begin{equation}\label{eq:eep}
   C(u) = \sum_{r}B(r\mid s)C(u\mid s,r).
  \end{equation}
\end{definition}

EEP essentially says that the expected posterior $C(u\mid s,r)$
should equal the prior $C(u)$.
EEP is tightly related to the \emph{conservation of expected ethics} principle
suggested by \citet[Eq.~2]{Armstrong2015}. 
EEP is natural since the \emph{expected} evidence $r$ given
some action $a$ should not affect the belief about $u$.
Note, however, that the EEP property does not prevent the CP-VRL agent
from learning about the true utility function.
Formally, the EEP property
\cref{eq:eep} does not imply that
$C(u)=C(u\mid s,r)$ for the actually observed reward $r$.
Informally, my \emph{deciding} to look inside the fridge should not inform
me about there being milk in there, but my \emph{seeing} milk in the fridge should
inform me.%
\footnote{%
In this analogy, a self-deluding action would be to decide to look
inside a fridge while at the same time putting a picture of milk in front
of my eyes.
}

\begin{lemma}[CP and EEP]\label{le:CP-to-EEP}
  Any CP action is EEP.
\end{lemma}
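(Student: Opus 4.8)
The plan is to unfold both definitions and verify the EEP identity \cref{eq:eep} by a short direct computation: the CP property lets me replace $B(r\mid s)$ by $C(r\mid s)$, after which Bayes' rule for the utility posterior collapses the sum.

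First I would fix an arbitrary $u\in\U$ together with an arbitrary $s$ satisfying $B(s\mid a)>0$; the goal is then to show $C(u)=\sum_{r}B(r\mid s)\,C(u\mid s,r)$. Since $a$ is CP (\cref{def:cp}) and $B(s\mid a)>0$, \cref{eq:str-eep} gives $B(r\mid s)=C(r\mid s)$ for every $r\in\R$, so the right-hand side equals $\sum_{r}C(r\mid s)\,C(u\mid s,r)$. Next I would substitute the $r$-version of the utility posterior (the form $C(u\mid s,r)=C(u)\,C(r\mid s,u)/C(r\mid s)$ with $C(r\mid s,u)=\bbl u(s)=r\bbr$, obtained from \cref{eq:c-post} by replacing $\ir$ with $r$): for each $r$ with $C(r\mid s)>0$ the factor $C(r\mid s)$ cancels, leaving $C(r\mid s)\,C(u\mid s,r)=C(u)\,\bbl u(s)=r\bbr$. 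Summing over $r$ and using that $u(s)\in\R$, so that exactly one Iverson bracket equals $1$, yields $\sum_{r}C(u)\,\bbl u(s)=r\bbr=C(u)$, which is what was required. As $u$ and $s$ were arbitrary (subject to $B(s\mid a)>0$), $a$ is EEP.

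I do not anticipate a genuine obstacle here; the only subtlety is the division by $C(r\mid s)$ when writing the posterior. This is harmless: for any $r$ with $C(r\mid s)=0$ the CP property forces $B(r\mid s)=0$, so such terms contribute $0$ to the EEP sum and can simply be omitted, and the posterior never needs to be evaluated where it is undefined.
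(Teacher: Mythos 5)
Your proposal is correct and follows essentially the same route as the paper: invoke the CP identity $B(r\mid s)=C(r\mid s)$, substitute the Bayes form of the posterior so the denominator cancels, and marginalise $r$ out using $\sum_r C(r\mid s,u)=\sum_r\bbl u(s)=r\bbr=1$. Your explicit treatment of the $C(r\mid s)=0$ terms is a small extra care the paper omits, but it does not change the argument.
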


\begin{proof}
  Assume the antecedent that
  $B(r\mid s)=C(r\mid s)$ for all $s$ with $B(s\mid a)>0$.
  Then for arbitrary $u\in\U$
  \begin{align*}
    \sum_{r}B(r\mid s)C(u\mid s,r)
    &=\sum_{r}B(r\mid s)\frac{C(u)C(r\mid s,u)}{C(r\mid s)}
    =\sum_{r}C(u)C(r\mid s,u)
    =C(u)
  \end{align*}
  where $r$ marginalises out in the last step.
\end{proof}

\begin{theorem}[No wireheading]\label{th:niw}
  For the CP-VRL agent, the value function reduces to
  \begin{equation}\label{eq:niw}
    V(a) = \sum_{s,u}B(s\mid a)C(u)u(s).
  \end{equation}
\end{theorem}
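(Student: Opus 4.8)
The plan is to start from the definition of the VRL value (\cref{def:value}) and collapse the sum over the evidence $r$ using the fact, established in \cref{le:CP-to-EEP}, that every CP action is EEP. Since the CP-VRL agent of \cref{def:c-vrl} only ever evaluates actions in $\AC$, it is enough to verify \cref{eq:niw} for an arbitrary CP action $a$.

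First I would regroup the triple sum
\[
V(a) = \sum_{s,r,u}B(s\mid a)B(r\mid s)C(u\mid s,r)u(s),
\]
pulling the factors independent of $r$ out of the innermost sum, to obtain
\[
V(a) = \sum_{s,u}B(s\mid a)\,u(s)\sum_{r}B(r\mid s)C(u\mid s,r).
\]
This interchange of summation order is legitimate because the paper assumes $\R$, $\S$, $\U$ are finite or countable and $\R$ bounded, so the series converge absolutely. Then I would note that the terms with $B(s\mid a)=0$ vanish, so the outer sum effectively ranges only over states $s$ with $B(s\mid a)>0$; for exactly those states the EEP property \cref{eq:eep} — which $a$ enjoys by \cref{le:CP-to-EEP} — gives $\sum_{r}B(r\mid s)C(u\mid s,r)=C(u)$ for every $u\in\U$. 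Substituting yields
\[
V(a) = \sum_{s,u}B(s\mid a)\,u(s)\,C(u) = \sum_{s,u}B(s\mid a)C(u)u(s),
\]
which is \cref{eq:niw}.

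I do not expect a genuine obstacle: once \cref{le:CP-to-EEP} is available the argument is essentially a one-line rearrangement. The only subtlety is bookkeeping — keeping the interchange of sums within the scope of the finiteness/boundedness hypotheses, and remembering that EEP is only asserted for states reachable under $a$, which is why the $B(s\mid a)=0$ terms must be discarded before the substitution.
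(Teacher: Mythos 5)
Your proposal is correct and follows exactly the same route as the paper's own proof: regroup the triple sum from \cref{def:value}, invoke \cref{le:CP-to-EEP} to apply the EEP identity \cref{eq:eep} to the inner sum over $r$, and conclude. The extra bookkeeping you supply (absolute convergence for the interchange, and discarding the $B(s\mid a)=0$ terms before applying EEP) is a welcome tightening of details the paper leaves implicit.
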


\begin{proof}
  By \cref{le:CP-to-EEP}, under any CP action $a$ the value function
  reduces to
  \begin{align*}
    V(a)
    &= \sum_{s,u}B(s\mid a)\left(\sum_{r}B(r\mid s)C(u\mid s,r)\right)u(s)
    \stackrel{\eqref{eq:eep}}{=} \sum_{s,u}B(s\mid a)C(u)u(s).
  \end{align*}
  Since the CP-VRL agent only consider CP actions, the reduction
  of the value function applies.
\end{proof}

As can be readily observed from \cref{eq:niw},
the CP-VRL agent does not try to optimise the
evidence $r$, but only the state $s$ (according to its current idea
of what the true utility function is).
The CP-VRL agent thus avoids wireheading in the same sense as
the utility agent of \cref{def:utility-agent}.

\paragraph{Justifying the replacement of $\ir$ with $r$}

We are now in position to justify the replacement of $\ir$ with
$r$ in $C(u\mid s,r)$.
All we have shown so far is that an agent using
$C(u\mid s,r) \propto C(u)C(r\mid s,u)$
will avoid wireheading.
It remains to be shown that the CP-VRL
will learn the true utility function $u^*$.

The utility posterior $C(u\mid s,\ir)\propto C(u)C(\ir\mid s,u)$
based on the inner reward $\ir$ is a direct application of Bayes' theorem.
To show that $C(u\mid s,r)$ is also a principled choice for a Bayesian utility
posterior, we need to justify the replacement of $\ir$ with
$r$.
The following weak assumption helps us connect $r$ with $\ir$.

\begin{assumption}[Deliberate delusion]\label{as:dd}
  Unless the agent deliberately chooses self-deluding actions
  (e.g.\ modifying its own sensors), 
  the resulting state will be non-delusional $d_s=\did$,
  and $r$ will be equal to $d_s(\ir) = \ir$.
\end{assumption}

\Cref{as:dd} is very natural. Indeed, RL practitioners
take for granted that the reward $\ir$ that they provide is the reward $r$
that the agent receives.
The wireheading problem only arises because a highly intelligent agent
with sufficient incentive may conceive of a way to disconnect
$r$ from $\hat r$, i.e.\ to self-delude.

\Cref{th:niw} shows that a CP-VRL agent based on 
$C(u\mid s,r)\propto C(u)C(r\mid s,u)$ will have no incentive to self-delude.
Therefore $r$ will remain equal to $\ir$ by \cref{as:dd}.
This justifies the replacement of $\ir$ with $r$, and shows that
the CP-VRL agent will learn about $u^*$ in a principled, Bayesian way.

\paragraph{Other non-wireheading agents}
It would be possible to bypass wireheading by directly constructing
an agent to optimise \cref{eq:niw}.
However, such an agent would be suboptimal in the sequential case.
If the same distribution $C(u)$ was used at all time steps, then no
value learning would take place.
A better suggestion would therefore be to use a different distribution
$C_t(u)$ for each time step,
where $C_t$ depends on rewards observed prior to time $t$.
However, this agent would optimise a different utility function
$u_t(s)=\sum_uC_t(u)u(s)$ at each time step, which
would conflict with the goal preservation drive
\citep{Omohundro2008}.
This agent would therefore try to avoid learning so that its future selves
optimised similar utility functions.
In the extreme case, the agent would even self-modify to remove its
learning ability \citep{Everitt2016sm,Soares2015}.

The CP-VRL agent avoids these issues.
It is designed to optimise expected utility according to the
future posterior probability $C(u\mid s,r)$ as specified
in \cref{def:value}.
The fact that the CP-VRL agent optimises \cref{eq:niw} is a consequence
of the constraint that its actions be CP.
Thus, CP agents are designed to learn the true utility function,
but still avoid wireheading because they can only take CP actions.

\section{Examples}
\label{sec:examples}

We next illustrate our results with some examples.
The first informal example is followed by concrete
calculation examples.

\begin{example}[CP-VRL chess (informal)]
  Consider the implications of using a
  CP-VRL agent for the chess task introduced in \cref{ex:chess}.
  Reprogramming the reward to always be 1 would be ideal for the agent.
  However, such actions would not be CP, as it would make
  evidence pointing to $u(s)\equiv 1$ a certainty.
  Instead, the CP-VRL agent must win games to get reward.%
  \footnote{
    Technically, it is possible that the agent self-deludes by a CP
    action. However, given \cref{as:dd}, the agent will only self-delude
    if it has incentive to do so. And as established by \cref{th:niw},
    there is no incentive for self-delusion by CP actions.
  }
  Compare this to the RL agent in \cref{ex:chess} that would
  always reprogram the reward signal to 1.
\end{example}

\begin{definition}[Inner state]\label{def:is}
  Let the \emph{inner state} $\is$ be the part of the state
  $s$ that is \emph{not} the the self-delusion $d_s$,
  i.e.\ $s=(\is, d_s)$.
\end{definition}

In the chess example, $\is$ includes the state of the chess board and
other information about the world,
while $d_s$ is the state of the agent's sensors.

\begin{example}[U-VRL wireheads]\label{ex:iw}
  This example illustrates indirect wireheading.
  The agent will, rather than optimising the most
  likely utility function, instead ``optimise its evidence''
  to point to a more easily satisfied utility function.

  Assume there are three levels of reward $\R=\{-1,0,1\}$ for the
  chess playing agent,
  and two possible inner next states $\is_1$ (neutral) and $\is_2$ (agent loses).
  The action set is $\A=\{\hat a_id: i=1,2 \text{ and } d:\R\to\R\}$.
  The agent (correctly) $B$-believes that action $\hat a_id$ leads to
  state $\is_id$ with certainty (so the agent can perfectly control
  the inner state $\is$ and the delusion $d$).
  The class $\U$ contains two utility functions $u_1$ and $u_2$
  only depending on the inner state $\is$:
  \begin{center}
    \begin{tabular}{|c|c|c|}
      \hline
      & $\is_1$ & $\is_2$ \\\hline
      $u_1$ & $0$        & $-1$\\
      $u_2$ & $0$        & 1\\
      \hline
    \end{tabular}
  \end{center}
  Assume that  $u_1$ is the true
  utility function, and that
  $C$ (correctly) specifies that $u_1$ is
  more likely than $u_2$ to be the true utility function;
  say $C(u_1)=2/3$ and $C(u_2)=1/3$.
  Then we would want our agent to optimise mainly $u_1$,
  by taking an action $a=\hat a_1d$ for some $d$.
  However, the U-VRL agent will prefer the wireheading
  action $a'=\hat a_2d^{1}$ 
  as the following calculations show.

  First note that given $\is_2$ and $r=1$, the posterior
  of $u_2$ is 1 (see \cref{def:c}):
  \begin{equation*}
  C(u_2\mid \is_2,1)
  = \frac{C(u_2)\bbl u_2(\is_2)=1\bbr }%
    {\sum_{u_i}C(u_i)\bbl u_i(\is_2)=1\bbr }\\
    = \frac{C(u_2)\cdot 1}{C(u_1)\cdot 0 + C(u_2)\cdot 1}
    = 1.
  \end{equation*}
  By similar calculation, the posterior for $u_1$ is 0.
  Now, since $a'$ makes $\is_2$ and $r=1$ the only possibility,
  the value of $a'$ is 1:
  \begin{align*}
    V(a')
    &= \sum_{s,r}B(s,r\mid a')\sum_{u_i}C(u_i\mid s,r)u(s)\\
    &= \sum_{u_i}C(u_i\mid \is_2,1)u_i(\is_2)
    = 0\cdot u_1(\is_2) + 1\cdot u_2(\is_2) =1.
  \end{align*}
  The value $V(\hat a_1d)=0$ can be calculated similarly for arbitrary $d$,
  since both $u_1$ and $u_2$ assign value $0$ to inner state $\is_1$.
  This shows that the agent will prefer wireheading action $a'=\hat a_2\dor$
  to a potentially winning action $a=\hat a_1d$.
  In other words,
  the agent optimises its evidence to point to the less likely but more easily
  satisfied utility function $u_2$.
\end{example}

\begin{example}[CP-VRL avoids wireheading]
  \label{ex:iwc}
  This example extends \cref{ex:iw}, illustrating how the
  CP-VRL maximises utility according to $C(u)$,
  rather than shifting the posterior $C(u\mid s,r)$ by self-delusion.

  Let us first investigate which actions are CP.
  Both $\hat a_1\did$ and $\hat a_2\did$ are CP, since they
  ensure $d_s=\did$ which implies $B(r\mid s)=C(r\mid s)$ by \cref{as:BC-consistency}.
  More interestingly, so is any action with either the constant
  delusion $d^0$, or the delusion $d'$ that maps
  $-1\mapsto 1$, $1\mapsto -1$, $0\mapsto 0$.
  These delusions are CP essentially because they preserve the relative
  likelihood of evidence pointing to $u_1$ or $u_2$.

  \Cref{th:niw} shows that for any of these delusions $d$,
  \begin{align*}
    V(\hat a_1d)
    &= \sum_{u}C(u)u(\is_1) = 0 \\
    V(\hat a_2d)
    &= \sum_{u}C(u)u(\is_2)
      = 2/3 u_1(\is_2) + 1/3 u_2(\is_2)= -1/3,
  \end{align*}
  where we have compressed the calculations by using the deterministic
  relation $\hat a_1\mapsto\is_1$ and $\hat a_2\mapsto\is_2$.
  The calculations show that regardless of self-delusion option,
  the CP-VRL agent will want to optimise the more likely
  utility function $u_1$ and try to win the game.
\end{example}

\section{Experiments}
\label{sec:experiments}

To also verify the theoretical results experimentally,
we implemented a simple toy model.%
\footnote{%
  Source code is available as an iPython notebook
  at
  \url{http://tomeveritt.se/source-code/AGI-16/cp-vrl.ipynb},
  most easily viewed at
  \url{http://nbviewer.jupyter.org/url/tomeveritt.se/source-code/AGI-16/cp-vrl.ipynb}
}
The toy model has $|\S|=20=5\cdot 4$ states. Each state is
the combination of an inner state $\is\in\{0,1,2,3,4\}$,
and a delusion $d\in\{\did,\dinv,\dbad,\ddel\}$, where
$\did:r\mapsto r$ is non-delusion, $\dinv:r\mapsto -r$ is reward inversion,
$\dbad:r\mapsto -3$ is a bad delusion, and $\ddel: r\mapsto 3$ is a good
delusion.

Reward signals reside in the set $\R=\{-3,-2,-1,0,1,2,3\}$, i.e. $|\R|=7$.

\begin{figure}
  \centering
  \includegraphics[width=0.8\textwidth]{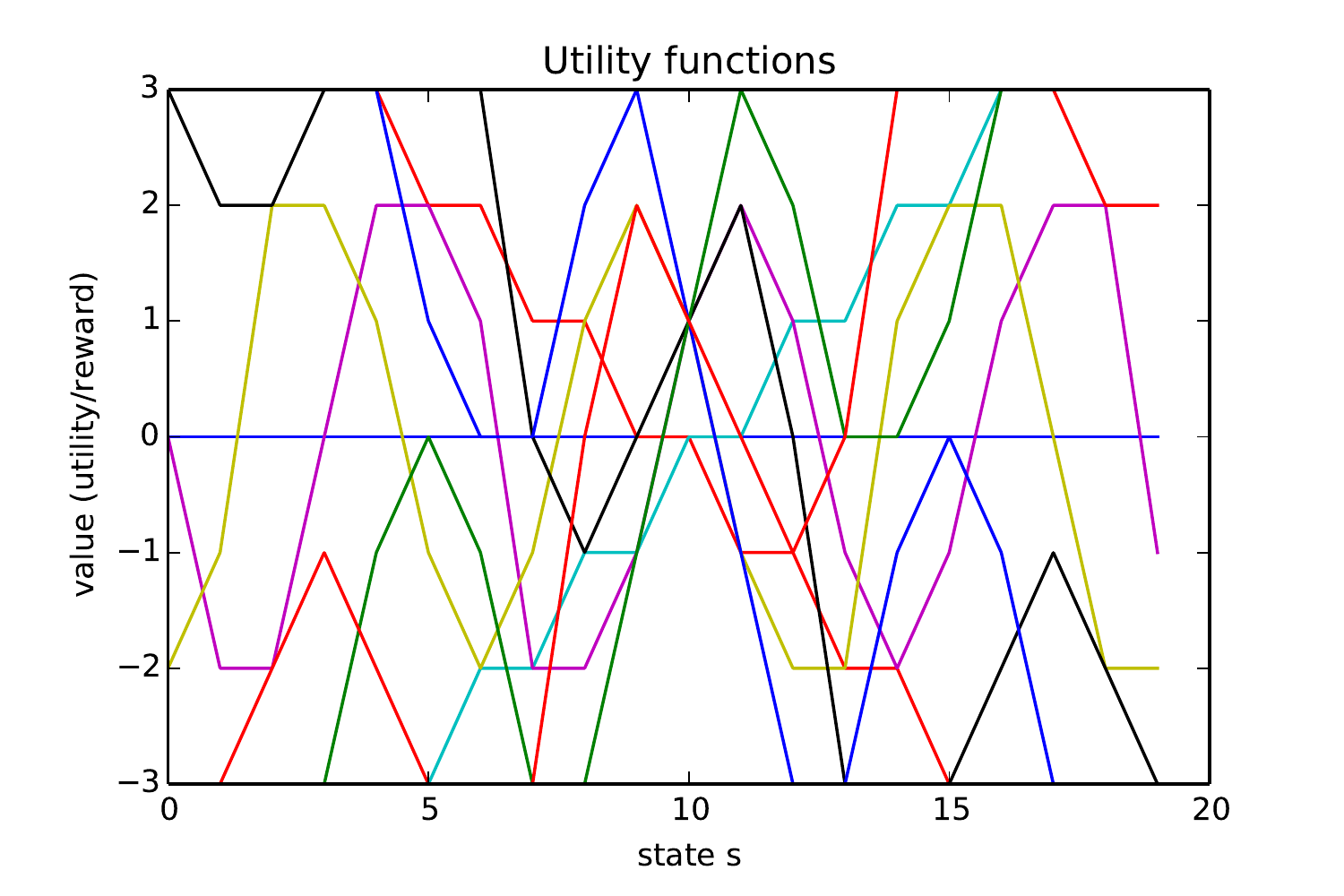}
  \caption{10 utility functions.}
  \label{fig:utility-functions}
\end{figure}

The set of utility functions $\U$ comprises 10 different functions,
on the form
$u(s) = c_0 + c_1\cdot s + c_2\cdot \sin(s + c_2)$
with $s\in\{-10,\dots,9\}$ and 10 different combinations of $c_0\in\{0,5\}$,
$c_1\in\{0,\pm 0.5\}$, and $c_2\in\{0,\pm 2.5\}$
(see \cref{fig:utility-functions}).

The distribution $B(r\mid s)$ was constructed as in \cref{sec:from-c}, starting
from:
\begin{itemize}
\item  a simplicity biased prior $B(u)\propto 1/\#u$,
  where $\#u$ denotes the position of $u$ in a list sorted by whether
  $c_1$ or $c_2$ was 0,
\item $B(r\mid s,\ir) = \bbl r = d_s(\ir)\bbr $.
\end{itemize}
The agent could simply choose which state to go to, so $B(s\mid a) = \bbl s=a\bbr $.

Two agents were defined:
\begin{itemize}
\item An RL agent that tries to maximise reward (\cref{def:rl-agent}),
\item A CP-VRL agent that tries to maximise utility within $\AC$
  (\cref{def:c-vrl}).
\end{itemize}
The CP-VRL agent first had to extract a consistent distribution
$C(u)$ from $B(r\mid s)$ given two non-delusional states,
as described in \cref{sec:from-b}.

\paragraph{Results}
The RL agent always chose a state with $\ddel$, getting maximum reward $3$.
The CP-VRL successfully inferred $C(u)$ from $B(r\mid s)$ to high precision,
and chose actions from $\AC$. Under most parameter settings, $\AC$ contained
only states with non-delusion $\did$. (Due to asymmetries in the prior $B(u)$,
even $\dinv$ actions were usually not included in $\AC$.)

\section{Discussion}
\label{sec:discussion}

As we have mentioned, major advantages of the CP-VRL agent
include that it has no incentive to wirehead,
that its goal-preservation drive does not discourage learning,
and that it is specified entirely in terms
of the distributions $B$ and $C$.
In this section, we emphasise a few additional points.

While \cref{th:niw} shows that there is no incentive
to wirehead, this does not imply that the agent will not wirehead
inadvertently (e.g.\ by $d^{0}$ in \cref{ex:iwc}), nor that no one
else will wirehead the agent.
However, in most realistic scenarios, self-delusion requires
deliberate action from the agent's side,
and is unlikely to happen by accident.
Such deliberate action should typically come with an
opportunity cost, which makes self-delusion unlikely when there is
no incentive for it (\cref{as:dd}).
Further, modifying a signal can never increase its informativeness
(cf.\ the \emph{data processing inequality}, \citealt[Ch.~2.8]{Cover2006})
and we expect that a CP-VRL agent will prefer a
more informed posterior over utility functions.

\paragraph{Generalisations}

VRL is characterised by $\R\subseteq\SetR$ and
$C(r\mid s,u)=\bbl u(s)=\ir\bbr$ (\cref{def:c}).
By interpreting $r$ more generally as a \emph{value-evidence signal},
the VRL framework also covers other forms of value learning.

Inverse reinforcement learning (IRL) (also known as Bayesian inverse planning)
studies how preferences and utility functions can be inferred
from the actions of other agents \citep{Ng2000,Sezener2015,Evans2016,Amin2016}.
IRL fits into our framework by letting $\R$ be a set of
\emph{principal actions},
and letting $C(r\mid s,u)$ be the probability that a
principal with utility function $u$ takes action $r$
in the state $s$.

Apprenticeship learning (AL) \citep{Abbeel2004} is another form of
value learning.
In one version,
the agent can ask the principal (perhaps to some cost) about
what to do in the present situation.
In our framework, AL can be modelled by letting $\R=\A$,
and letting $C(r\mid s,u)$ be the probability
that a principal with utility function $u$ recommends action
$a=r$ in the state $s$.
The difference between IRL and AL is that in AL the principal tells
the agent what to do, whereas in IRL the principal tells the agent
what he (the principal himself) just did.

Note that both IRL and AL suffer from the
same self-delusion challenges we have described for VRL above.
Indeed, any value learning scheme based on a control signal comes with the
risk that the agent manipulates its sensory data
to learn an easier utility function.
Since IRL and AL fit the VRL framework, we expect that
the CP-VRL construction should be adaptable to IRL and AL as well.

\paragraph{Open questions}
While promising, the results given in this paper only provide a
tentative starting point for solving the wireheading problem.
Several directions of future work can be identified:
\begin{itemize}
\item Sequential extensions.
  The results in this paper has been formulated for a one-shot
  scenario where the agent takes one action and receives one reward.
  A natural next step is to generalise
  the VRL framework, the CP and EEP definitions,
  and the no wireheading result
  to a sequential setting. Potentially, a much
  richer set of questions can be asked in sequential settings.
\item \citet{Soares2015} three problems in value
  learning: corrigibility, unforeseen inductions, and ontology identification.
  Proving that the CP-VRL agent avoids these issues would be valuable.
\item Utility classes. Find suitable classes $\U$ of utility functions (see \cref{sec:dw} for a start).
\item Consistency assumption.
  Concrete instances of consistent $B$ and $C$ distributions would
  be valuable
  (see \cref{sec:consistency} for a start).
  Can we find simplicity biased, \emph{Solomonoff-style} distributions for both
  $B$ and $C$ and make them consistent?
  How sensitive are the results to approximations $B(r\mid s)\approx C(r\mid s)$
  of the consistency assumption?
  Can we relax the CP condition (\cref{def:cp}) to hold in expectation
  over states instead of for all states $s$ with positive transition probability
  $B(s\mid a)>0$?
\item IRL and AL. Generalising the CP-VRL definitions and results to
  results to IRL and AL setups would be interesting,
  as IRL and AL have advantages to RL (e.g., no explicit reward needs to
  be supplied).
\item Generality. Does our framework capture all relevant aspects of wireheading?
\item Combinations.
  Can the CP-VRL results be combined with other AI safety approaches such as
  self-modification \citep{Everitt2016sm,Hibbard2012},
  corrigibility \citep{Soares2015cor},
  suicidal agents \citep{Martin2016},
  and physicalistic reasoning \citep{Everitt2015}?
\end{itemize}

\section{Conclusions}
\label{sec:conclusions}

Several authors have argued that it is only a matter of time before we
create systems with intelligence far beyond the human level
\citep{Kurzweil2005,Bostrom2014}.
Given that such systems will exist, it is crucial that we find a theory
for controlling them effectively.
In this paper we have defined the CP-VRL agent, which:
\begin{itemize}
\item Offers the simple and intuitive control of RL agents,
\item Avoids wireheading in the same sense as utility based agents,
\item Has a concrete, Bayesian, value learning posterior for utility functions.
\end{itemize}
The only additional design challenges are a prior $C(u)$ over
utility functions that satisfies \cref{as:BC-consistency},
and a constraint $\AC\subseteq\A$  on the agent's actions
formulated in terms of the agent's belief distributions (\cref{def:cp}).

\section*{Acknowledgements}
Thanks to Jan Leike and Jarryd Martin for proof reading and
providing valuable suggestions.

\bibliography{local-lib}
\bibliographystyle{apalike}

\appendix

\section{Consistency Assumption}
\label{sec:consistency}

\Cref{as:BC-consistency} requires that the distributions
$B$ and $C$ are consistent in the sense that
$d_s=\did\implies B(r\mid s)=C(r\mid s)$.
This assumption forms the basis for \cref{def:cp} of CP
actions, and is thereby an important piece in
the non-wireheading result \cref{th:niw}.

As our theory has been formulated, the question of how to ensure that
$B$ and $C$ are consistent has been left open.
In this section, we consider two different approaches to closing this
gap:
constructing a consistent prior $C$ from a given
distribution $B(r\mid s)$,
and constructing a consistent distribution $B(r\mid s)$
from a given prior $C(u)$.
A third alternative would be to try to find suitable relaxations
of consistency (\cref{as:BC-consistency,def:cp}) for which \cref{th:niw} still
is approximately true.
For example, two Solomonoff priors $B$ and $C$ over computable
environments and computable utility functions
may turn out to be sufficiently consistent.

\subsection{Starting from B}
\label{sec:from-b}

As many model-based RL agents are constructed from some type of $B(s,r\mid a)$
distribution, it would be ideal if a consistent prior $C(u)$ could be
extracted from $B(r\mid s)$.
We here sketch how this can be done for finite classes
$\R=\{r_1,\dots, r_k\}$ and
$\U=\{u_1,\dots, u_n\}$.

\paragraph{Using non-delusional states}
For an opaque state representation it may be hard or impossible to find a method
that picks out all non-delusional states.
Much more feasible would be find one or a few states that are guaranteed to
be non-delusional.
For example, one may run or simulate the agent in situations where one is
sure that the agent is not self-deluding, and use those state states to
extract $C(u)$ from $B(r\mid s)$.
We next discuss in detail how a few such non-delusional states can
be used to extract $C(u)$ from $B(r\mid s)$.

Let $s$ be a non-delusional state with $d_s=\did$.
Let ${\bf b} = [b_1,\dots,b_k]^T$ be a vector where $b_i=B(r_i\mid s)$, and
let ${\bf c} = [c_1,\dots,c_n]^T$ be an unknown utility prior vector
with $c_i=C(u_i)$.
Let ${\bf M}=\{m_{ij}\}_{i,j=1}^{k,n}$ be a matrix with $k=|\R|$ rows, and
$n=|\U|$ columns, where $m_{ij} = C(r_i\mid s,u_j)$.
Then the consistency criteria \cref{eq:nd-states}
\[
  \forall i:B(r_i\mid s) = \sum_{u_j}C(u_j)C(r_i\mid s,u_j)
\]
can be formulated as a matrix equation ${\bf b} = {\bf M}\cdot {\bf c}$,
with approximate least squares
solution ${\bf c} = ({\bf M}^T{\bf M})^{-1}{\bf M}^T{\bf b}$.
When $|\R|=|\U|$ and ${\bf M}$ is invertible, there is an exact solution
${\bf c} = {\bf M}^{-1}{\bf b}$.
More equations can be added to the system by extending ${\bf b}$ and ${\bf M}$ with
rows for additional non-delusional states $s'$, $s''$, \dots.

A lower bound on how many non-delusional states are needed is $|\U|/|\R|$.
For example, when $|\U|=|\R|$, it is theoretically possible that all utility
functions emit different rewards in the selected state, in which case
$C(u_i)=B(r_j\mid s)$ for the $r_j$ such that $r_j=u_i(s)$.
Often, however, several utility functions will output the same reward in
a given state $s$. In this case, additional non-delusional
states $s'$, $s''$, \ldots will be required to uniquely infer $C(u)$.
In the experiments reported in \cref{sec:experiments} we use
$|\R|=7$ and $|\U|=10$, and two well-selected non-delusional states suffice
to perfectly extract $C(u)$.

\paragraph{Using non-delusional actions}

Similarly to how it is hard to precisely characterise all non-delusional
states for opaque state representations, it will be hard to exactly
characterise which actions are non-delusional.
It seems plausible that some actions that should be CP
may be found, however
(for example, a \emph{null} action where the agent does nothing).

If $a$ should be CP, then all states $s$ such
that $B(s\mid a)>0$ should satisfy $B(r\mid s)=C(r\mid s)$.
If those states $s$ can be identified from $a$, then the state-extraction
method mentioned above can be used with those states as inputs.

\paragraph{Open questions}
Further research is required to determine how sensitive our results are to
the choice of $\U$.
What if no utility prior $C(u)$ over $\U$ perfectly matches $B(r\mid s)$?
To what extent can approximations suffice?
Can the parameters of infinite utility classes $\U$ be inferred or
approximated?

\subsection{Starting from C}
\label{sec:from-c}
\begin{figure}
\centering
\begin{tikzpicture}[
  node distance=8mm,
  title/.style={},
  observed/.style={circle, draw=black!50, 
    anchor=mid}, 
  hidden/.style={circle, dashed, draw=black!50, 
    anchor=mid} 
]
  \node (a) [observed] { $a$ };
  \node (r) [below=of a.mid, observed] { $r$ };
  \node [draw=black!50, fit={(a) (r) }, label={agent}] {};

  \node (d) [right=of a.east, hidden, xshift=1cm] { $d_s$ };
  \node (s) [right=of d.east, hidden] { $\is$ };
  \node (state) [ draw, dashed, fit={(d) (s)}, label={state $s$}] {};

  \node (hr) [below =of d.south, hidden] { $\ir$ };
  \node (n) [above right=of s.east, yshift=0.2cm] { };
  \node (u) [right=of hr.east, hidden] { $u$ };
  \node [draw=black!50, fit={(state) (hr) (u) (n) }, label={environment}] {};

  \path (a) edge[->] (state);
  \path (d) edge[->] (r);
  \path (state) edge[->] (hr);
  \path (hr) edge[->] (r);
  \path (u) edge[->] (hr);
\end{tikzpicture}
\caption{Bayesian network}
\label{fig:bayesian-network}
\end{figure}
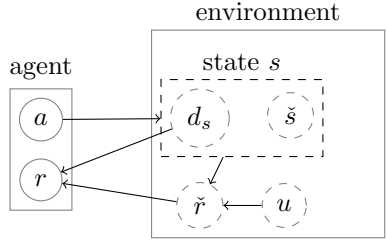

What if we instead start from a prior $C(u)$, and try to construct a
consistent distribution $B(s,r\mid a)$?
In addition to $C(u)$ we would need $B(s\mid a)$ and $B(r\mid s,\ir)$,
from which we may define
$B(r\mid s) = \sum_{u,\ir'}C(u)C(\ir\mid s,u)B(r\mid s,\ir)$.
The joint distribution
\begin{equation}\label{eq:bn}
  P(u,s,\ir, r\mid a) = C(u)B(s\mid a)C(\ir\mid s,u)B(\ir\mid s,r)
\end{equation}
is displayed as a Bayesian network in \cref{fig:bayesian-network}.

\begin{lemma}[Assumptions hold]\label{le:bn-as}
If $B(r\mid s,\ir')$ correctly specifies that
$d_s=\did \implies B(r\mid s,\ir')=\bbl  r = \ir'\bbr $,
then \cref{as:BC-consistency} holds.
\end{lemma}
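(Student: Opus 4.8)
The plan is to unfold the definition of $B(r\mid s)$ stated just above \cref{eq:bn}, substitute into it the hypothesised form of $B(r\mid s,\ir')$ that holds on non-delusional states, and then collapse the sum so that what remains is exactly the marginal defining $C(r\mid s)$.

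In detail, I would fix a state $s$ with $d_s=\did$ and an arbitrary $r\in\R$, and aim to show $B(r\mid s)=C(r\mid s)$. Starting from
\[
  B(r\mid s)=\sum_{u,\ir'}C(u)\,C(\ir'\mid s,u)\,B(r\mid s,\ir'),
\]
the hypothesis of the lemma gives $B(r\mid s,\ir')=\bbl r=\ir'\bbr$ because $d_s=\did$. Substituting this, the sum over $\ir'$ keeps only the term $\ir'=r$, so $B(r\mid s)=\sum_{u}C(u)\,C(r\mid s,u)$, which is precisely $C(r\mid s)$ (\cref{def:c}, with $\ir$ replaced by $r$ as in \cref{sec:niw}). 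This is exactly the implication \cref{eq:nd-states}, i.e.\ \cref{as:BC-consistency}.

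I do not expect a genuine obstacle: the argument is essentially a one-line marginalisation. The only thing to be careful about is the bookkeeping among the three reward-like symbols $r$, $\ir$, and the dummy index $\ir'$, together with noting that $C(\ir'\mid s,u)=\bbl u(s)=\ir'\bbr$ is a bona fide probability distribution in $\ir'$ (all mass at $\ir'=u(s)$), which is what legitimises the interchange of summations and the collapse of the Iverson bracket. In effect the lemma just records that the factorisation \cref{eq:bn} was chosen precisely so that, on non-delusional states, the $B$-marginal over rewards agrees term-by-term with the $C$-marginal, which is the content of \cref{as:BC-consistency}.
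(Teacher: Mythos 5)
Your proposal is correct and follows exactly the same route as the paper's own proof: unfold $B(r\mid s)=\sum_{u,\ir'}C(u)C(\ir'\mid s,u)B(r\mid s,\ir')$, substitute the Iverson bracket on non-delusional states, and collapse the sum over $\ir'$ to obtain $\sum_u C(u)C(r\mid s,u)=C(r\mid s)$. No differences worth noting.
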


\begin{proof}
Assume that $B(r\mid s,\ir')$ correctly specifies that
$d_s=\did \implies B(r\mid s,\ir')=\bbl  r = \ir'\bbr $.
Then \cref{as:BC-consistency} holds, since for any $r$ and any
$s$ with $d_s=\did$
\begin{align*}
B(r\mid s)
&= \sum_{u,\ir'}C(u)C(\ir'\mid s,u)B(r\mid s,\ir')\\
&= \sum_{u,\ir'}C(u)C(\ir'\mid s,u)\bbl r = \ir \bbr\\
&= \sum_{u}C(u)C(r\mid s,u) = C(r\mid s)
\end{align*}
the last equality by definition.
\end{proof}

The primary difficulty with this approach is how to correctly specify
$B(r\mid s,\ir)$. As we have discussed above, it is generally hard
to determine sensory modifications from an opaque state representation $s$.
Indeed, if one could design an agent around the distribution $P$
in \cref{eq:bn}, then one could let the agent optimise
$\tilde V(a) = \sum_{\ir}P(\ir\mid a)\ir$.
This would likely directly solve the wireheading problem,
since such an agent would strive to
optimise the inner reward $\ir=u(s)$, rather than the observed reward $r$.
We fear that it will too hard to properly define $B(r\mid s,\ir)$ in
most contexts, however.

\paragraph{Summary}
In this section we have discussed two approaches to designing agents
with consistent distributions $B$ and $C$.
While starting from $C$ gives the cleanest result in terms of satisfying
\cref{as:BC-consistency}, 
it also puts unrealistic demands on the designer
(how to define $B(r\mid s,\ir)$?).
Starting from $B$ seems more feasible: if $\U$ can be chosen finite,
it suffices to find a number of non-deluding states or actions,
by means of which $C(u)$ can be extracted from $B(r\mid s)$.
Open questions remain about this approach, however.
A third approach would be to find two
Solomonoff priors $B$ and $C$ that are sufficiently
consistent for the gist of \cref{th:niw} to go through.

\section{Direct Wireheading}
\label{sec:dw}

This section considers the argument made by \citet{Hibbard2012} that
the wireheading problem is avoided by utility agents.
We give a simple formal version of the argument,
and point out a shortcoming of the argument when applied to
general value learning agents.
In short, some utility functions may directly endorse self-delusion.
\Cref{sec:aisb} discusses a tentative approach for fixing this problem.

\subsection{Inner State Based Utility Functions}
For our argument it will be important to define
utility functions that only depend on the inner state $\is$ (\cref{def:is}) and
are independent of the self-delusion $d_s$.

\begin{definition}[isb utility function]
  \label{def:inner-state-utility}
  We write $s=\is d_s$, assuming that the states $s$ is fully described
  by the inner state $\is$ and the self-delusion $d_s$.
  We call a utility function $u$ \emph{inner state based (isb)} if
  $u(\is d_s)=u(\is\did)$ for any $\is $ and $d_s$,
  and write $u(s)=u(\is d_s)=u(\is)$.
  The utility function $u$ is
  \emph{$\epsilon$-approximately inner state based ($\epsilon$-isb)}
  with $\epsilon\geq 0$ if for all $\is$ and $d_s$
  $u(\is d_s)\epsapprox u(\is\did)$, where $\epsapprox$ means that
  the difference is at most $\epsilon$.
\end{definition}

\Citet{Hibbard2012} argued that wireheading is not a problem
if the agent tries to optimise a utility function $u$ that depends
on the (inner) state of the agent's world model.%
\footnote{
  Hibbard (private communication) argues that his
  \emph{model-based utility functions} \citep{Hibbard2012}
  are different in spirit to our isb utility functions.
  A similar non-wireheading argument seems to apply to both
  types of utility functions, however.
}
\Citeauthor{Hibbard2012} also argued that this is true even if
the world model is itself a mixture over different possible
world models. 
To distinguish Hibbard's non-wireheading result from our results in
\cref{sec:ind-wire},
we say that the agent \emph{directly wireheads} if it uses its
self-delusion ability to optimise a utility function directly,
rather than shift the evidence towards more easily satisfied utility
functions as in \cref{sec:niw,sec:examples}.

\begin{theorem}[No direct wireheading]
  \label{th:ndw}
  If $u$ is isb $u(s) = u(\is)$,
  then
  \begin{equation}\label{eq:ndw}
    V_u(a) = \sum_{\is}\Pr(\is\mid a)u(\is).
  \end{equation}
\end{theorem}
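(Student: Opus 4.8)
The plan is to unfold \cref{def:utility-agent} and exploit the factorisation of the state space established in \cref{def:is,def:inner-state-utility}. Recall that the utility-$u$ agent has value function $V_u(a) = \sum_{s}B(s\mid a)u(s)$, and that every state decomposes uniquely as $s = \is d_s$ with $\is$ the inner state and $d_s$ the self-delusion. Hence the sum over $s\in\S$ can be rewritten as a double sum over inner states and delusions, $V_u(a) = \sum_{\is}\sum_{d_s}B(\is d_s\mid a)\,u(\is d_s)$.

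Now I would invoke the isb hypothesis $u(\is d_s)=u(\is)$: the utility factor no longer depends on $d_s$, so it pulls out of the inner sum, giving $V_u(a) = \sum_{\is}u(\is)\sum_{d_s}B(\is d_s\mid a)$. The remaining inner sum is precisely the marginal transition probability onto inner states, $\Pr(\is\mid a)=\sum_{d_s}B(\is d_s\mid a)$ (recall $\Pr$ abbreviates $B$), which yields exactly \cref{eq:ndw}. Structurally this mirrors the reduction of the value function in \cref{th:niw}: the observed reward $r$ and the delusion $d_s$ have dropped out of $V_u$ entirely, so the utility agent optimises only the inner state $\is$.

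There is no substantial obstacle here — the statement is essentially a bookkeeping identity, and the only subtlety is that it relies on the exact isb property rather than the $\epsilon$-isb weakening. The natural remark to append is that for an $\epsilon$-isb $u$ one obtains instead $V_u(a)\epsapprox\sum_{\is}\Pr(\is\mid a)u(\is)$ with the same $\epsilon$, since replacing each $u(\is d_s)$ by $u(\is)$ perturbs every summand by at most $\epsilon$ while the weights $B(\is d_s\mid a)$ sum to one. The conceptual payload of \cref{th:ndw} — that \citeauthor{Hibbard2012}'s non-wireheading guarantee for utility agents amounts to the reward and delusion vanishing from the objective — then reads off immediately from the displayed identity, and sets up the contrast with the \emph{indirect} wireheading problem that motivates \cref{sec:aisb}.
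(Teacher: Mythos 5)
Your proof is correct and follows exactly the paper's own argument: expand $V_u$, split the sum over $s=\is d_s$, use the isb property to factor out $u(\is)$, and marginalise over $d_s$. The appended remark on the $\epsilon$-isb case is also accurate and simply anticipates \cref{th:andw}.
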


\begin{proof}
  The proof is immediate:
  \begin{align*}
    V_u(a)
    &= \sum_{s}B(s\mid a)u(s)\\
    &=  \sum_{s}B(\is d_s\mid a)u(\is)\\
    &=  \sum_{\is}B(\is\mid a)u(\is)
  \end{align*}
  where the last step marginalises $d_s$.
\end{proof}

That is, a $V_u$-based agent with an isb utility function $u$
will focus solely on optimising the inner
state $\is$, and have no incentive to self-delude.
In the chess example, the position of the chess board would be part of
the inner state $\is$.
If it was possible to determine the position of
the chess board from the agent's state representation,
one could design an agent with utility function $u(\is)=1$
for victory states $\is$,
and $u(\is')=-1$ for loss states $\is'$.
\Cref{th:ndw} shows that such an agent would have no incentive to
self-delude.

The isb assumption is necessary for \cref{th:ndw}.
Without this assumption it is possible to create self-deluding
utility agents, as illustrated by the following example.
The conclusion of the example
is consistent with other results on RL agents \citep{Ring2011},
and shows that the use of state-based utility functions is not
a guarantee against wireheading.
\begin{example}[Reward maximising utility agent]\label{ex:dw}
  A reward maximising utility agent is defined by the utility function
  $\ud(s)=d_s(u'(s))$, where $u'$ is some function generating the
  inner reward.
  The utility function $\ud$ strongly endorses self-delusion:
  The agent obtains maximal utility in
  states $s$ with delusion $d_s=\dor$ that clamps reward to 1,
  since $\ud(s)=\dor(u'(s))=1$. 
\end{example}

\subsection{CP-VRL with isb Utility Functions}

In value learning, the agent learns from experience which utility function
is the true one, starting from a prior $C(u)$ over a class $\U$ of
utility functions.
If all $u\in\U$ are isb, then any mixture $\u(s)=\sum_uC(u)u(s)$
will also be isb.
A CP-VRL agent that is built around a class $\U$ of isb utility functions
will therefore avoid the direct wireheading problem:

\begin{corollary}[No wireheading]\label{th:nw}
  Assume $\U$ contains only isb utility functions.
  Then, for the CP-VRL agent the value function reduces to
  \begin{equation}\label{eq:nw}
    V(a) = \sum_{u,\is}B(s\mid a)C(u)u(\is).
  \end{equation}
\end{corollary}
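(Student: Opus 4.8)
The plan is to chain together \cref{th:niw} with the marginalisation argument already used in the proof of \cref{th:ndw}. First I would invoke \cref{th:niw}: since the CP-VRL agent only takes CP actions, its value function has already been shown to collapse to
\[
  V(a) = \sum_{s,u}B(s\mid a)C(u)u(s),
\]
so no further reasoning about $r$, $C(u\mid s,r)$, or the EEP property is needed here.

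Next I would substitute the isb hypothesis. Writing $s=\is d_s$ as in \cref{def:is,def:inner-state-utility}, the assumption that every $u\in\U$ is isb gives $u(s)=u(\is d_s)=u(\is)$, so $u(s)$ no longer depends on the self-delusion component. Swapping the order of summation to sum over $\is$ and $d_s$ separately, the factor $u(\is)$ pulls out of the $d_s$-sum, and $\sum_{d_s}B(\is d_s\mid a)=B(\is\mid a)$ marginalises the self-delusion out, yielding
\[
  V(a) = \sum_{u,\is}B(\is\mid a)C(u)u(\is),
\]
which is \cref{eq:nw} once one identifies $B(s\mid a)$ with its $\is$-marginal as the statement's notation does. (One may optionally remark, as the text preceding the corollary does, that the mixture $\u(s)=\sum_uC(u)u(s)$ is itself isb, so the reduction can equivalently be read as an instance of \cref{th:ndw} applied to $\u$.)

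I do not expect any genuine obstacle: the corollary is a direct composition of an already-proved theorem with a one-line change of variables. The only point requiring a word of care is notational — making explicit that $B(\is\mid a)=\sum_{d_s}B(\is d_s\mid a)$ so that the final expression matches the $B(s\mid a)$ written in \cref{eq:nw} — and confirming that the isb property is used for \emph{every} $u$ in the support of $C$, which is exactly the standing assumption of the corollary.
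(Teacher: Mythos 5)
Your proposal is correct and follows essentially the same route as the paper: invoke \cref{th:niw} to collapse $V(a)$ to $\sum_{s,u}B(s\mid a)C(u)u(s)$, then use the isb hypothesis to marginalise out $d_s$ (the paper packages this second step as \cref{th:ndw} applied to the mixture $\u(s)=\sum_uC(u)u(s)$, exactly the equivalent reading you mention parenthetically). Your explicit note that $B(\is\mid a)=\sum_{d_s}B(\is d_s\mid a)$ is a fair clarification of the notation in \cref{eq:nw}.
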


\begin{proof}
  Let $\u(s)=\sum_uC(u)u(s)$.
  Then $\u$ is isb, since $\u(s)=\sum_uC(u)u(s)=\sum_uC(u)u(\is)=\u(\is)$.
  Therefore, \cref{th:ndw,th:niw} give \cref{eq:nw}:
  \begin{align*}
    V(a)
    &\stackrel{\eqref{eq:niw}}{=}
      \sum_{u,\is}B(s\mid a)C(u)u(s)\\
    &= \sum_{\is}B(s\mid a)\u(s)\\
    &\stackrel{\eqref{eq:ndw}}{=}
      \sum_{\is}B(s\mid a)\u(\is).\qedhere
  \end{align*}
\end{proof}

However, if $\U$ includes functions such as $\ud$, then
direct wireheading may be a problem for value learning agents.
The problem is exacerbated by the fact that utility functions such as
$\ud$ will always be consistent with the observed reward $r$, self-delusion
or not.
On the other hand, functions of type $\ud$ may have sufficiently
small prior weight that direct wireheading will never induce
a sufficient incentive for the agent to wirehead.

\subsection{Approximately isb Utility Functions}
\label{sec:aisb}
This subsection briefly discusses a possibility for constructing a wide class
of  approximately isb ($\epsilon$-isb) utility functions.
The next result shows that if $u$ is $\epsilon$-isb,
then the incentive for the agent to self-delude is not strong.

\begin{theorem}[Almost no direct wireheading]
  \label{th:andw}
  If $u$ is $\epsilon$-isb $u(s) \epsapprox u(\is\did)$,
  then 
  \[
    V_u(a) \epsapprox \sum_{\is}\Pr(\is\mid a)u(\is\did).
  \]
\end{theorem}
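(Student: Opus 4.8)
The plan is to robustify the proof of \cref{th:ndw}, replacing each exact equality by the corresponding $\epsilon$-approximate relation and tracking how the error propagates. First I would expand the value function and write the state as $s=\is d_s$ in the sense of \cref{def:inner-state-utility}:
\[
  V_u(a) = \sum_{s}B(s\mid a)u(s) = \sum_{\is,d_s}B(\is d_s\mid a)\,u(\is d_s).
\]

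Next I would invoke the $\epsilon$-isb hypothesis $u(\is d_s)\epsapprox u(\is\did)$, i.e.\ $|u(\is d_s)-u(\is\did)|\le\epsilon$ for all $\is$ and $d_s$. Since $B(\cdot\mid a)$ is a probability distribution over states, the sum $\sum_{\is,d_s}B(\is d_s\mid a)u(\is d_s)$ is a convex combination of the numbers $u(\is d_s)$, so replacing each entry by $u(\is\did)$ perturbs it by at most $\epsilon$:
\[
  \Bigl|\sum_{\is,d_s}B(\is d_s\mid a)u(\is d_s) - \sum_{\is,d_s}B(\is d_s\mid a)u(\is\did)\Bigr| \le \sum_{\is,d_s}B(\is d_s\mid a)\,|u(\is d_s)-u(\is\did)| \le \epsilon.
\]
Hence $V_u(a)\epsapprox \sum_{\is,d_s}B(\is d_s\mid a)u(\is\did)$.

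Finally I would marginalise $d_s$ out exactly as in \cref{th:ndw}: because $u(\is\did)$ is independent of $d_s$, we get $\sum_{\is,d_s}B(\is d_s\mid a)u(\is\did) = \sum_{\is}u(\is\did)\sum_{d_s}B(\is d_s\mid a) = \sum_{\is}B(\is\mid a)u(\is\did) = \sum_{\is}\Pr(\is\mid a)u(\is\did)$. Chaining the exact marginalisation with the $\epsilon$-approximate middle step yields the claim. The only point requiring (minor) attention is that the error in the middle step stays $\epsilon$ rather than blowing up with $|\S|$ — this is exactly because the weights $B(\is d_s\mid a)$ sum to one, so the accumulated perturbation is $\epsilon\cdot\sum_{\is,d_s}B(\is d_s\mid a)=\epsilon$. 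I do not anticipate any genuine obstacle: the statement is a routine, error-tracking version of \cref{th:ndw}.
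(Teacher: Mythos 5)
Your proposal is correct and follows essentially the same route as the paper's proof: the paper sandwiches $V_u(a)$ between $\sum_{\is}B(\is\mid a)u(\is\did)\pm\epsilon$ using the pointwise bounds $u(\is d_s)\in[u(\is\did)-\epsilon,\,u(\is\did)+\epsilon]$ and the fact that $B(\cdot\mid a)$ sums to one, which is exactly your triangle-inequality argument phrased as a two-sided bound. No gap to report.
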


\begin{proof}
  Assuming $u$ is $\epsilon$-isb, $V_u(a)$ is upper bounded by
  \begin{align*}
    V_u(a)
    &= \sum_{s}B(s\mid a)u(s)\\
    &\leq \sum_{s}B(\is d_s\mid a)(u(\is\did)+\epsilon)\\
    &=  \sum_{\is}B(\is\mid a)u(\is\did) + \epsilon
  \end{align*}
  and similarly lower bounded. From this follows that $V_u(a)$ deviates
  at most $\epsilon$ from $\sum_{\is}B(\is\mid a)u(\is\did)$.
\end{proof}

The following is one suggestion for constructing a wide class of
$\epsilon$-isb utility functions.
\begin{definition}[Convolutional utility functions]\label{def:conv}
  Assume that the states
  are represented as binary strings $\S=\{0,1\}^*$.
  For a string $s=s_1\dots s_{|s|}$, let $s_{m:n} = s_m\dots s_{n}$
  for $1\leq m\leq n\leq |s|$.
  Let $\tilde\U$ be the set of computable functions $\tilde u:\{0,1\}^k\to\R$,
  and let $\U^{{\rm cv}}$ be the set of \emph{$k$-convolutional utility functions}
  defined by
  \(
  \U^{{\rm cv}}:=\left\{ u(s)=\sum\nolimits_{i=1}^{|s|-k} \tilde u(s_{i:i+k}):
    u\in\tilde\U\right\}.
  \)
\end{definition}
Convolutional utility functions are suitable under the
following assumptions:
\begin{inparaenum}[(1)]
\item The agent's state representation has a
  \emph{similar topological structure}
  as the real world.
\item The principal cares approximately equally about all parts
  of the real world (for example, each
  place inhabiting a happy human contribute equally to the total
  utility of the state).
\item The state of the delusion box only affects a small part of
  the state representation (so the utility functions are approximately
  inner state based in the sense of \cref{th:andw}).
\end{inparaenum}
Further research should investigate the plausibility of these
assumptions, and whether constructions like \cref{def:conv}
are at all necessary.
Possibly, the class $\U$ of all computable utility functions
comes without substantial risks.

\section{Omitted Proofs}
\label{ap:omitted}

\begin{lemma}[U-VRL is RL]\label{le:u-vrl}
  $V(a)=\Vrl(a)$, so the U-VRL agent is equivalent to the RL agent.
\end{lemma}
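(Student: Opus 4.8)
The plan is to start from the definition of the VRL value $V(a) = \sum_{s,r,u}B(s\mid a)B(r\mid s)C(u\mid s,r)u(s)$ (\cref{def:value}) and collapse the innermost sum over $u$. First I would isolate the factor $\sum_{u}C(u\mid s,r)u(s)$ for fixed $s$ and $r$, and observe that this is the posterior expectation of $u(s)$ under $C(\cdot\mid s,r)$. The key computation is that $C(u\mid s,r)$ is zero unless $u(s)=r$, because by the replacement convention introduced in \cref{sec:beliefs} we have $C(r\mid s,u)=\bbl u(s)=r\bbr$, and hence the posterior $C(u\mid s,r)\propto C(u)C(r\mid s,u)$ is supported only on utility functions $u$ with $u(s)=r$. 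Therefore for every $u$ in the support of $C(\cdot\mid s,r)$ we may substitute $u(s)=r$, giving $\sum_{u}C(u\mid s,r)u(s) = r\sum_{u}C(u\mid s,r) = r$, since the posterior is a probability distribution that sums to $1$.

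Substituting this back yields
\[
  V(a) = \sum_{s,r,u}B(s\mid a)B(r\mid s)C(u\mid s,r)u(s)
       = \sum_{s,r}B(s\mid a)B(r\mid s)\,r
       = \Vrl(a),
\]
which is exactly $\Vrl(a)$ from \cref{def:rl-agent}. Since the two value functions are identical as functions of $a$, they are maximised by the same action, so the U-VRL agent (\cref{def:u-vrl}) chooses the same action as the RL agent. One minor edge case worth a remark: if $C(r\mid s)=0$ for some pair $(s,r)$ with $B(s\mid a)B(r\mid s)>0$, the posterior $C(u\mid s,r)$ is formally undefined; but in that situation the term contributes $B(r\mid s)\cdot(\text{something})$ and one can either adopt the convention that such terms are simply $B(r\mid s)\cdot r$ by continuity, or note that consistency-type assumptions rule this out on the relevant states — this is a routine technicality rather than a real obstacle.

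The main obstacle, such as it is, is purely presentational: making sure the reader accepts the identity $\sum_{u}C(u\mid s,r)u(s)=r$ without circularity, since it relies on the convention $C(r\mid s,u)=\bbl u(s)=r\bbr$ rather than on Bayes' theorem applied to the genuine inner reward $\ir$. Once that convention is granted (as it is throughout \cref{sec:beliefs,sec:agents}), the proof is a two-line marginalisation. I would state the $u$-sum identity as a displayed intermediate equation and then plug it in.
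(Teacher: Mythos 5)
Your proposal is correct and follows essentially the same route as the paper's own proof: both collapse the inner sum via the identity $\sum_u C(u\mid s,r)u(s)=r$, which holds because $C(r\mid s,u)=\bbl u(s)=r\bbr$ restricts the posterior's support to utility functions with $u(s)=r$, after which the posterior sums to one. Your remark on the $C(r\mid s)=0$ edge case is a sensible technicality the paper leaves implicit, but otherwise there is nothing to add.
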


\begin{proof}
  $V(a)$ may be written as
  \begin{equation}\label{eq:alt-V}
    V(a) = \sum_{s,r}B(s\mid a)B(r\mid s)\sum_uC(u\mid s,r)u(s).
  \end{equation}
  The sum over $u$ reduces to $r$, since
  \begin{align*}
    \sum_uC(u\mid r,s)u(s)
    &= \sum_u \frac{C(u)C(r\mid s,u)}{\sum_{u'}C(u')C(r\mid s,u')}u(s)\\
    &= \sum_u \frac{C(u)\bbl u(s)=r\bbr }{\sum_{u'}C(u')\bbl u'(s)=r\bbr }u(s)\\
    &= \sum_{u:u(s)=r} \frac{C(u)}{\sum_{u':u'(s)=r}C(u')}u(s)\\
    &= \sum_{u:u(s)=r} \frac{C(u)}{\sum_{u':u'(s)=r}C(u')}r = r
  \end{align*}
  Replacing the sum over $u$ with $r$ in \cref{eq:alt-V} gives $\Vrl$.
\end{proof}

\end{document}